\documentclass[accepted]{uai2026}

\usepackage[american]{babel}
\usepackage{natbib}
\usepackage{amsmath,amssymb,amsthm}
\usepackage{mathtools}
\usepackage{booktabs}
\usepackage{graphicx}
\usepackage{algorithm}
\usepackage{algorithmic}
\usepackage{subcaption}
\usepackage[font=footnotesize]{caption}
\usepackage{xcolor}
\usepackage{colortbl}
\usepackage{tikz}
\usetikzlibrary{arrows.meta, positioning, calc, fit, decorations.markings}

\definecolor{lightblue}{RGB}{224,240,255}
\definecolor{lightyellow}{RGB}{255,255,224}
\definecolor{lightgreen}{RGB}{224,255,224}
\definecolor{lightred}{RGB}{255,224,224}
\definecolor{accentblue}{HTML}{003E74}
\definecolor{riskred}{HTML}{E74C3C}
\definecolor{protectgreen}{HTML}{27AE60}

\definecolor{impgreen}{RGB}{34,139,34}
\definecolor{impred}{RGB}{200,50,50}
\newcommand{\impup}[1]{\textcolor{impgreen}{#1}}
\newcommand{\impdn}[1]{\textcolor{impred}{#1}}

\newcommand{\E}{\mathbb{E}}
\newcommand{\R}{\mathbb{R}}
\newcommand{\X}{\mathcal{X}}
\newcommand{\D}{\mathcal{D}}

\newcommand{\Kcal}{\mathcal{K}}
\newcommand{\TV}{\mathrm{TV}}

\newcommand{\Wass}{\mathrm{W}}
\newcommand{\doop}{\mathrm{do}}
\DeclareMathOperator*{\argmin}{arg\,min}

\makeatletter
\titlespacing{\section}{\z@}{*0.5}{*0.0}
\titlespacing{\subsection}{\z@}{*0.5}{*0.0}
\titlespacing{\subsubsection}{\z@}{*0.5}{*0.0}
\titlespacing{\paragraph}{\z@}{*0.0}{0.5em}
\makeatother

\theoremstyle{plain}
\newtheorem{theorem}{Theorem}

\newtheorem{proposition}{Proposition}

\newtheorem{assumption}{Assumption}
\theoremstyle{definition}

\newtheorem{remark}{Remark}
\newtheorem{example}{Example}

\title{CausalWrap: Model-Agnostic Causal Constraint Wrappers for Tabular Synthetic Data}

\author[1]{Amir Asiaee}
\author[1]{Zhuohui J.\ Liang}
\author[2]{Chao Yan}
\affil[1]{%
    Department of Biostatistics, Vanderbilt University Medical Center, Nashville, TN, USA
}
\affil[2]{%
    Department of Biomedical Informatics, Vanderbilt University Medical Center, Nashville, TN, USA
}
\affil[ ]{\texttt{\{amir.asiaeetaheri, zhuohui.liang.1, chao.yan.1\}@vumc.org}}

\begin{document}

\setlength{\parskip}{0.25\baselineskip}
\setlength{\abovedisplayskip}{6pt plus 2pt minus 2pt}
\setlength{\belowdisplayskip}{6pt plus 2pt minus 2pt}
\setlength{\abovedisplayshortskip}{3pt plus 1pt}
\setlength{\belowdisplayshortskip}{3pt plus 1pt}
\setlength{\textfloatsep}{6pt plus 2pt minus 2pt}
\setlength{\floatsep}{5pt plus 2pt minus 2pt}
\setlength{\intextsep}{5pt plus 2pt minus 2pt}
\setlength{\abovecaptionskip}{4pt plus 1pt minus 1pt}
\setlength{\belowcaptionskip}{0pt}
\maketitle

\begin{abstract}
Tabular synthetic data generators are typically trained to match observational distributions, which can yield high conventional utility (e.g., column correlations, predictive accuracy) yet poor preservation of structural relations relevant to causal analysis and out-of-distribution (OOD) reasoning. When the downstream use of synthetic data involves causal reasoning---estimating treatment effects, evaluating policies, or testing mediation pathways---merely matching the observational distribution is insufficient: structural fidelity and treatment-mechanism preservation become essential. We propose \emph{CausalWrap} (CW), a model-agnostic wrapper that injects \emph{partial causal knowledge} (PCK)---trusted edges, forbidden edges, and qualitative/monotonic constraints---into any pretrained base generator (GAN, VAE, or diffusion model), without requiring access to its internals. CW learns a lightweight, differentiable \emph{post-hoc correction map} applied to samples from the base generator, optimized with causal penalty terms under an augmented-Lagrangian schedule. We provide theoretical results connecting penalty-based optimization to constraint satisfaction and relating approximate factorization to joint distributional control. We validate CW on simulated structural causal models (SCMs) with known ground-truth interventions, semi-synthetic causal benchmarks (IHDP and an ACIC-style suite), and a real-world ICU cohort (MIMIC-IV) with expert-elicited partial graphs. CW improves causal fidelity across diverse base generators---e.g., reducing average treatment effect (ATE) error by up to 63\% on ACIC and lifting ATE agreement from 0.00 to 0.38 on the intensive care unit (ICU) cohort---while largely retaining conventional utility.
\end{abstract}

\section{Introduction}
\label{sec:intro}

Synthetic data generation has emerged as a critical tool for enabling data sharing, augmentation, and method benchmarking in high-stakes domains such as healthcare \citep{walonoski2018synthea,choi2017medgan}. The standard approach trains a deep generative model---generative adversarial network (GAN) \citep{xu2019ctgan,park2018tablegan}, variational autoencoder (VAE), diffusion model \citep{kotelnikov2023tabddpm}, or large language model (LLM)---to approximate the observational joint distribution $P(X)$ and samples new records from the learned model. Conventional evaluation metrics focus on distributional resemblance (marginal statistics, pairwise correlations, Wasserstein distance) or downstream predictive utility (train-on-synthetic, test-on-real, or TSTR) \citep{yan2024jmir_tutorial,liang2025generating}.

However, when the downstream task involves causal reasoning---estimating treatment effects, evaluating policies under counterfactual scenarios, or testing mediation pathways---matching the observational distribution may be necessary but insufficient. Causal conclusions rely on the data-generating process preserving conditional independence structure, mechanism modularity, and interventional semantics \citep{pearl2009causality,peters2017elements}. A synthetic dataset that closely matches observational statistics but distorts the treatment-assignment mechanism or the outcome model can produce misleading treatment effect estimates and flawed policy recommendations \citep{amad2025steam}.

Recent work has begun addressing this gap from complementary angles: STEAM \citep{amad2025steam} augments generators with treatment-mechanism--specific losses; TabStruct \citep{jiang2025tabstruct} proposes structural fidelity metrics; DCM \citep{chao2023dcm} builds generators directly from causal graphs; and C-DGMs \citep{stoian2024constrained} enforce hard domain constraints via differentiable layers (see Section~\ref{sec:related} for details).

These approaches are valuable but share a common limitation: they either require a \emph{fully specified} causal graph (e.g., DCM) or address only specific types of constraints (STEAM targets treatment mechanisms; C-DGMs target domain-knowledge constraints but not causal structure). In practice, domain experts often possess \emph{partial causal knowledge} (PCK): a subset of trusted directed edges (``smoking causes lung cancer''), forbidden edges (``treatment cannot cause age''), temporal orderings, qualitative monotonicity constraints (``increasing drug dose cannot decrease blood pressure''), and directional effect signs. Crucially, practitioners may only have access to samples from an existing off-the-shelf generator---without the ability to modify its architecture or training procedure---and want to improve its causal properties post hoc.

\paragraph{Contributions.} This paper introduces CausalWrap, a model-agnostic \emph{post-hoc correction} wrapper that injects PCK into samples from any base tabular generator. Our contributions are:
\begin{enumerate}
    \item \textbf{A flexible causal constraint framework.} We formalize partial causal knowledge as trusted edges, forbidden edges, and qualitative/monotonic constraints, and define differentiable penalty functionals based on kernel independence measures and paired-sample comparisons.

    \item \textbf{Theoretical results.} We prove penalty-method convergence to constraint-satisfying optima (Theorem~\ref{thm:penalty}) and a chain-rule TV bound showing that approximate conditional matching implies joint distributional closeness (Proposition~\ref{prop:chain_tv}).

    \item \textbf{An augmented-Lagrangian training schedule.} We extend the basic penalty method to an augmented-Lagrangian scheme with dual updates, improving convergence without requiring manual $\lambda$-tuning.

    \item \textbf{Comprehensive experiments.} We evaluate CausalWrap on simulated SCMs (Tier~1), semi-synthetic causal benchmarks (Tier~2: IHDP and an Atlantic Causal Inference Conference (ACIC)-style suite), and a real-world ICU cohort from MIMIC-IV (Tier~3) using a small clinically-plausible constraint set. We wrap three base generators (CTGAN, TabDDPM, TVAE) and include an oracle SCM sampler for Tier~1 as an upper bound when the full causal model is known.
\end{enumerate}

\section{Related Work}
\label{sec:related}

\subsection{Tabular Synthetic Data Generators}

GAN-based generators include CTGAN \citep{xu2019ctgan}, TableGAN \citep{park2018tablegan}, and CTAB-GAN+ \citep{zhao2024ctabganplus}; VAE-based approaches include TVAE \citep{xu2019ctgan}. Diffusion-based generators such as TabDDPM \citep{kotelnikov2023tabddpm} apply DDPM \citep{ho2020ddpm} with multinomial and Gaussian diffusion for mixed-type columns, achieving state-of-the-art fidelity. Score-based SDEs \citep{song2021score} and LLM-based methods (GReaT \citep{borisov2023great}) provide further alternatives; see \citet{fang2024survey} for a survey.

\subsection{Medical and EHR Synthetic Data}

Synthetic electronic health record (EHR) generation is motivated by regulatory barriers to data sharing and privacy risk. MedGAN \citep{choi2017medgan} generates multi-label discrete patient records using an autoencoder-GAN architecture. EMR-WGAN \citep{zhang2020emrwgan} improves upon this by incorporating the Wasserstein objective and removing the autoencoder, yielding higher-fidelity EHR simulation. Recent tutorials and reviews emphasize evaluation and reproducibility in health data synthesis \citep{yan2024jmir_tutorial}; \citet{liang2025generating} demonstrate clinically grounded synthetic cohort generation for multi-national longitudinal studies.

\subsection{Causal Models and Causal Generative Modeling}

SCMs provide a mechanism-based factorization with interventional and counterfactual semantics \citep{pearl2009causality,peters2017elements}. CausalGAN \citep{kocaoglu2018causalgan} learns causal implicit generative models structured by a given causal graph, enabling both observational and interventional sampling. DCM \citep{chao2023dcm} couples diffusion models with causal graphs for interventional and counterfactual queries. Relational SCM-based synthesis extends to multi-table data \citep{hoppe2025relational}. DeCaFlow \citep{decaflow2025} uses normalizing flows for deconfounding and causal generation.

\subsection{Treatment-Effect--Oriented Synthetic Data and Evaluation}

STEAM \citep{amad2025steam} proposes desiderata (covariate, treatment-assignment, and outcome preservation) and evaluation metrics tailored to downstream treatment-effect analysis in medicine, augmenting a base generator with mechanism-specific losses. TabStruct \citep{jiang2025tabstruct} emphasizes structural fidelity and introduces ``global utility'' as an SCM-free proxy metric. Fairness-aware synthetic generation uses causal or counterfactual constraints \citep{nagesh2025fairtabgen,kusner2017cf_fairness}. Causal-TGAN \citep{wen2021causaltgan} injects a known causal graph into CTGAN's conditional generation. The ``Causality for Tabular Data Synthesis'' (CaTS) benchmark \citep{liu2024cats} evaluates tabular generators on causal metrics using high-order structural causal models.

\subsection{Constrained Deep Generative Models}

Stoian et al.~\citep{stoian2024constrained} show how to transform DGMs into Constrained DGMs (C-DGMs) by adding a differentiable constraint layer that guarantees sample compliance with domain constraints---achieving near-100\% compliance compared to $>$95\% violation rates in unconstrained models. Physics-informed generative models \citep{yang2018pidgm} embed PDE constraints as soft penalties in the generator loss, providing a template for domain-knowledge regularization. Augmented Lagrangian methods for constrained neural network training \citep{fioretto2024auglag,chatzos2023stochastic_alm} provide convergence-guaranteed alternatives to fixed-penalty approaches.

\subsection{Causal Discovery, Independence Testing, and Constrained Optimization}

Causal discovery methods---constraint-based (PC/FCI) \citep{spirtes2000causation}, score-based (GES) \citep{chickering2002ges}, and continuous (NOTEARS) \citep{zheng2018notears}---can provide partial graphs; CausalWrap uses such knowledge to constrain a generator's output distribution. Kernel independence measures (Hilbert-Schmidt independence criterion, HSIC \citep{gretton2005hsic}, distance correlation \citep{szekely2007dcor}, KCIT \citep{zhang2012kcit}) serve as differentiable penalties, and penalty/augmented-Lagrangian methods \citep{bertsekas1999nonlinear} provide the optimization framework. Classifier two-sample tests \citep{lopez2016classifier2st} offer practical alternatives for distributional comparison.

\section{Problem Setup}
\label{sec:setup}

\subsection{Data and Base Generator}
Let $X=(X_1,\dots,X_d)\in\X\subseteq\R^d$ denote a mixed-type tabular record with $d$ columns (continuous or binary; multi-class categorical columns can be one-hot encoded into binary indicators). Let $P$ denote the (unknown) population distribution over $\X$, and let $\D_n=\{x^{(i)}\}_{i=1}^n\sim P^n$ be the observed dataset.

A \emph{base} synthetic data generator is a (possibly neural) sampling mechanism $G: Z\to\X$ with latent noise $Z\sim\nu$ (e.g., GAN generator, VAE decoder, or diffusion reverse process). It induces a model distribution $Q_{\mathrm{base}}$ on $\X$. We assume only black-box access to the base model: we can (i)~train it on $\D_n$ using its standard procedure (or receive a pretrained model) and (ii)~draw samples from $Q_{\mathrm{base}}$. CausalWrap does not require access to the base generator's internal parameters, training loss, or gradients (see Figure~\ref{fig:architecture} for an overview of the pipeline).

\subsection{Partial Causal Knowledge}
We represent available causal knowledge (illustrated in Figure~\ref{fig:partial_knowledge}) by a tuple $\Kcal = (E^+, E^0, \mathcal{M})$, where:
\begin{itemize}
    \item $E^+\subseteq[d]\times[d]$: \emph{trusted} directed edges $i\to j$ (``$X_i$ is a direct cause of $X_j$'').
    \item $E^0\subseteq[d]\times[d]$: \emph{forbidden} edges (``$X_i$ does not directly cause $X_j$'').
    \item $\mathcal{M}$: qualitative constraints (monotonicity, sign restrictions). Each $m\in\mathcal{M}$ specifies a 4-tuple $(i,j,S,\sigma)$ meaning ``$X_j$ is $\sigma$-monotone in $X_i$ given $X_S$'', with $\sigma\in\{+,-\}$. The conditioning set $S$ is arbitrary and need not correspond to a parent set; the formalism encodes any qualitative monotonic relationship, though in practice the constraints are typically motivated by causal reasoning.
\end{itemize}
This knowledge may be elicited from domain experts, derived from temporal structure (e.g., pre-treatment variables precede treatment), or extracted from data via causal discovery algorithms that return partial graphs or equivalence classes \citep{spirtes2000causation,zheng2018notears}. Temporal precedence information can be encoded by adding forbidden edges to $E^0$ (e.g., forbidding post-treatment variables from directly causing pre-treatment variables).

\begin{example}[ICU partial graph]
In an ICU cohort, an expert might specify: $E^+=\{$severity$\to$treatment, treatment$\to$outcome$\}$, $E^0=\{$treatment$\to$age, treatment$\to$sex$\}$, and $\mathcal{M}=\{$severity monotonically increases outcome risk$\}$.
\end{example}

\begin{figure}[t]
\centering
\begin{tikzpicture}[
  node distance=1.0cm,
  rv/.style={circle, draw, thick, minimum size=7mm, fill=lightblue, font=\small},
  known/.style={-{Stealth[length=4pt]}, thick, protectgreen},
  forbidden/.style={-, thick, riskred, dashed},
]
  \node[rv] (X1) {$X_1$};
  \node[rv, right=of X1] (X2) {$X_2$};
  \node[rv, below=of X1] (X3) {$X_3$};
  \node[rv, below=of X2] (X4) {$X_4$};

  \draw[known] (X1) -- (X3)
    node[midway, left, font=\scriptsize, protectgreen] {$E^+$};
  \draw[known] (X2) -- (X4)
    node[midway, right, font=\scriptsize, protectgreen] {$E^+$};
  \draw[forbidden] (X1) -- (X4)
    node[midway, above, font=\scriptsize, riskred, sloped] {$E^0$};
  \draw[-, thick, gray, densely dotted] (X1) -- (X2)
    node[midway, above, font=\scriptsize, gray] {?};
\end{tikzpicture}
\caption{Partial causal knowledge $\Kcal=(E^+,E^0,\mathcal{M})$.
{\color{protectgreen}Green solid}: trusted edges ($E^+$).
{\color{riskred}Red dashed}: forbidden edges ($E^0$).
Gray dotted: unknown status.}
\label{fig:partial_knowledge}
\end{figure}
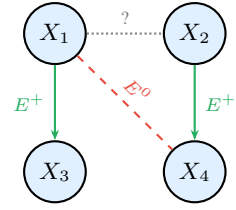

\subsection{Utility and Constraint Functionals}
Let $\mathcal{U}(P,Q)$ be a distributional loss measuring conventional similarity (e.g., IPM such as Wasserstein $\Wass$ or MMD, or adversarial loss) and let $\Omega_{\Kcal}(Q)$ be a nonnegative functional that quantifies violation of knowledge constraints. We aim to learn a synthetic distribution that trades off conventional resemblance and causal constraint satisfaction:
\begin{equation}
\label{eq:population_obj}
\min_{\phi}\; \mathcal{U}(P,Q_\phi) + \lambda\, \Omega_{\Kcal}(Q_\phi), \qquad \lambda>0.
\end{equation}
In our setting, $Q_\phi := (f_\phi)_\#\, Q_{\mathrm{base}}$ is obtained by applying a differentiable correction map $f_\phi:\X\to\X$ to samples from a pretrained base generator $Q_{\mathrm{base}}$.

\paragraph{Constraint Functionals.} We define $\Omega_{\Kcal}(Q) = \alpha\, \Omega_{\mathrm{CI}}(Q;E^+,E^0) + \beta\,\Omega_{\mathrm{mono}}(Q;\mathcal{M})$ as a weighted sum of two types of penalties:

\paragraph{(1) Residualized independence constraints.} Let $\mathrm{Pa}^+(j)$ denote the trusted parents of $X_j$ induced by $E^+$. We fit simple edge models $\hat m_j(X_{\mathrm{Pa}^+(j)})$ on real data (linear regression for continuous targets, logistic regression for binary targets) and define residuals $\tilde X_j := X_j - \hat m_j(X_{\mathrm{Pa}^+(j)})$ (with $\tilde X_j=X_j$ when $\mathrm{Pa}^+(j)=\emptyset$). For each forbidden pair $(a,b)\in E^0$, we penalize dependence between residuals under $Q$:
\begin{equation}
\Omega_{\mathrm{CI}}(Q) := \sum_{(a,b)\in E^0} w_{a,b}\,\mathrm{HSIC}(\tilde X_a,\tilde X_b; Q),
\end{equation}
where HSIC is computed with a radial basis function (RBF) kernel \citep{gretton2005hsic}. This residualization acts as a differentiable proxy for conditional-independence structure implied by the trusted graph (see Figure~\ref{fig:hsic_pipeline}). Although HSIC is symmetric, the residualization step is direction-aware: for a forbidden edge $a\to b$, the residual $\tilde X_b$ is computed by regressing out the trusted parents $\mathrm{Pa}^+(b)$, so the penalty structure reflects the directed nature of $E^0$. Note that it is a \emph{proxy}, not an exact conditional independence test: if a true parent $X_c \in \mathrm{Pa}(X_b) \setminus \mathrm{Pa}^+(b)$ is missing from the trusted set, its effect remains in the residual $\tilde X_b$. Any correlation of $X_a$ with $X_c$ via another path then inflates $\mathrm{HSIC}(\tilde X_a, \tilde X_b)$ even though $a \to b$ is truly absent---a false penalty. In practice this is mitigated when $E^+$ captures the important parents.

\begin{figure}[t]
\centering
\begin{tikzpicture}[
  box/.style={draw, rounded corners, minimum height=0.65cm,
              minimum width=1.4cm, align=center, font=\scriptsize, thick},
  fwd/.style={-Stealth, thick},
  bwd/.style={Stealth-, thick, riskred}
]
  \node[box, fill=white] (base) {$\tilde x \sim Q_{\mathrm{base}}$};
  \node[box, fill=white, right=0.7cm of base] (corr) {$x = f_\phi(\tilde x)$};
  \node[box, fill=white, right=0.7cm of corr] (resid) {Residuals\\[-1pt]$\tilde X_a,\, \tilde X_b$};
  \node[box, fill=white, right=0.7cm of resid] (hsic) {HSIC\\[-1pt]penalty};

  \draw[fwd] (base) -- (corr);
  \draw[fwd] (corr) -- node[below, font=\tiny]{$-\hat m$} (resid);
  \draw[fwd] (resid) -- (hsic);

  \draw[bwd, bend left=30] (corr.north) to
    node[above, font=\tiny, riskred]{$\nabla_\phi$} (hsic.north);
\end{tikzpicture}
\caption{Gradient flow through the CI penalty. \textbf{Forward} (black): raw samples pass through $f_\phi$; frozen edge models $\hat{m}$ produce residuals; HSIC on residual pairs gives the penalty. \textbf{Backward} (red): gradients flow back through $f_\phi$ only (edge models are frozen).}
\label{fig:hsic_pipeline}
\end{figure}
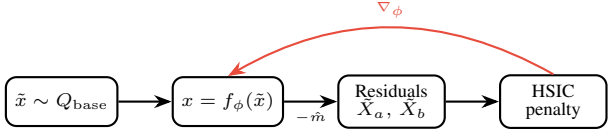

\paragraph{(2) Monotonic/sign constraints.} For a constraint $(i,j,S,+)\in\mathcal{M}$ (``$X_j$ nondecreasing in $X_i$ given $X_S$''), define a penalty via paired synthetic samples:
\begin{multline}
\Omega_{\mathrm{mono}}(Q) := \sum_{(i,j,S,\sigma)\in\mathcal{M}} \\
  \E_{\tilde x\sim Q}\Big[\max\big(0,\; -\sigma\cdot \hat\partial_{x_i}\E_Q[X_j\mid X_S{=}\tilde x_S, X_i{=}\tilde x_i]\big)\Big],
\end{multline}
where $\hat\partial_{x_i}$ is a finite-difference estimate of the conditional effect of $X_i$ on $X_j$ given $X_S$. Concretely, for each base sample $\tilde x\sim Q_{\mathrm{base}}$ we construct a twin $\tilde x'$ identical to $\tilde x$ except that $\tilde x'_i = \tilde x_i + \delta$ for a small perturbation $\delta>0$. Both $\tilde x$ and $\tilde x'$ are passed through the correction map, yielding $x=f_\phi(\tilde x)$ and $x'=f_\phi(\tilde x')$. A hinge loss penalizes sign violations: $\max(0, -\sigma\cdot(x'_j - x_j))$. The perturbation magnitude $\delta$ is a hyperparameter (we use $\delta=0.5$ in all experiments). Because $f_\phi$ is differentiable, gradients flow from the hinge loss back through both forward passes to~$\phi$.

\section{CausalWrap: A Model-Agnostic Post-Hoc Correction Wrapper}
\label{sec:method}

\begin{figure}[t]
\centering
\begin{tikzpicture}[
  box/.style={draw, rounded corners, minimum height=0.7cm,
              minimum width=1.4cm, align=center, font=\scriptsize, thick},
  fwd/.style={-Stealth, thick},
  bwd/.style={Stealth-, thick, riskred},
  node distance=0.7cm
]
  \node[box] (real) {Real Data\\$\D \sim P$};
  \node[box, right=of real] (base) {Base Gen.\\$Q_{\mathrm{base}}$};
  \node[box, right=of base] (raw) {Raw\\$\tilde x$};
  \node[box, right=of raw] (fmap) {Correction\\$f_\phi$};

  \draw[fwd] (real) -- node[above, font=\tiny]{train} (base);
  \draw[fwd] (base) -- node[above, font=\tiny]{sample} (raw);
  \draw[fwd] (raw) -- (fmap);

  \node[box, below=0.8cm of base, xshift=0.35cm] (loss) {Loss\\$\mathcal{L}(\phi)$};
  \node[box, right=0.5cm of loss] (know) {Knowledge\\$\Kcal$};
  \node[box, below=0.8cm of fmap] (corrected) {Corrected\\$x\!=\!f_\phi(\tilde x)$};

  \draw[fwd] (fmap) -- (corrected);

  \draw[fwd] (know) -- (loss);

  \draw[fwd] (corrected.south) -- ++(0,-0.3) -| (loss.south);

  \draw[fwd] (real.south) |- (loss.west);

  \draw[bwd] (fmap.south west) -- node[left, font=\tiny, riskred]{$\nabla_\phi$}
    ([xshift=-0.15cm]loss.north);
\end{tikzpicture}
\caption{CausalWrap pipeline.  A pretrained base generator (frozen) produces raw samples; a correction map $f_\phi$ produces corrected samples, which---together with real data and partial knowledge $\Kcal$---feed the loss.  \textbf{Black}: forward data flow; \textbf{\color{riskred}red}: gradient (only $\phi$ is updated).}
\label{fig:architecture}
\end{figure}
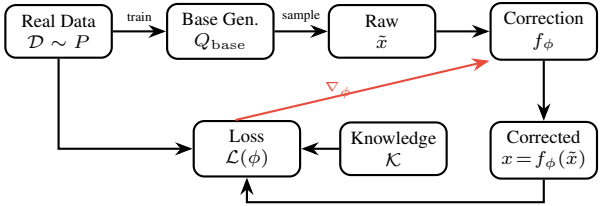

\subsection{Empirical Objective}
Given data $\D_n$ and base synthetic samples $\tilde x^{(1:m)}\sim Q_{\mathrm{base}}$, we learn a correction map $f_\phi$ and optimize an empirical analogue of \eqref{eq:population_obj} on the corrected samples $f_\phi(\tilde x)$:
\begin{equation}
\label{eq:empirical_obj}
\hat\phi \in \argmin_{\phi}\; \hat{\mathcal{U}}(\D_n, Q_\phi) + \lambda\, \hat\Omega_{\Kcal}(Q_\phi; m),
\end{equation}
where $Q_\phi := (f_\phi)_\# Q_{\mathrm{base}}$.

\subsection{Augmented Lagrangian Schedule}
\label{sec:alm}
A fixed $\lambda$ requires manual tuning and can lead to under- or over-penalization. We adopt an augmented Lagrangian method (ALM) \citep{bertsekas1999nonlinear,chatzos2023stochastic_alm} that adaptively adjusts both $\lambda$ and introduces dual variables $\mu$ for each constraint group:
{\small
\begin{multline*}
\mathcal{L}(\phi,\mu,\lambda) = \hat{\mathcal{U}}(\D_n, Q_\phi) + \sum_c \mu_c\, \hat\Omega_c(Q_\phi)
  + \tfrac{\lambda}{2}\sum_c \hat\Omega_c(Q_\phi)^2,
\end{multline*}
}
where $c$ indexes individual constraint terms. The full procedure---alternating primal gradient steps, dual variable updates, and penalty increases---is detailed in Algorithm~\ref{alg:causalwrap}; the empirical objective~\eqref{eq:empirical_obj} is minimized in each inner loop.

\begin{algorithm}[t]
\caption{CausalWrap (post-hoc correction around a base generator)}
\label{alg:causalwrap}
\begin{algorithmic}[1]
\REQUIRE data $\D_n$, pretrained base generator $Q_{\mathrm{base}}$, correction map parameters $\phi$, causal knowledge $\Kcal$, initial $\lambda_0$, growth rate $\rho$, inner steps $T_{\mathrm{inner}}$
\STATE Initialize dual variables $\mu_c\leftarrow 0$ for each constraint $c$
\FOR{outer iterations $k=1,2,\dots,K$}
  \FOR{inner steps $t=1,\dots,T_{\mathrm{inner}}$}
    \STATE Sample minibatch $\{x^{(i)}\}$ from $\D_n$ and base minibatch $\{\tilde x^{(j)}\}$ from $Q_{\mathrm{base}}$
    \STATE Apply correction: $x^{(j)} \leftarrow f_\phi(\tilde x^{(j)})$
\STATE Compute a utility surrogate $\hat{\mathcal{U}}(\D_n, \{x^{(j)}\})$ (e.g., moment matching + identity regularization)
    \STATE Compute constraint losses on corrected samples: $\hat\Omega_{\mathrm{CI}},\hat\Omega_{\mathrm{mono}}$
    \STATE Set ALM loss: $\hat L = \hat{\mathcal{U}} + \sum_c \mu_c \hat\Omega_c + \frac{\lambda}{2}\sum_c \hat\Omega_c^2$
    \STATE Update $\phi$ by stochastic gradient step on $\hat L$
  \ENDFOR
  \STATE Update duals: $\mu_c \leftarrow \mu_c + \lambda\,\hat\Omega_c(Q_\phi)$ for each $c$
  \STATE Increase penalty: $\lambda \leftarrow \rho\,\lambda$
\ENDFOR
\RETURN corrected generator $Q_{\hat\phi} := (f_{\hat\phi})_\# Q_{\mathrm{base}}$
\end{algorithmic}
\end{algorithm}

Implementation details---including the correction map architecture, HSIC estimation, monotonicity estimation, binary column handling, and computational overhead---are provided in Appendix~\ref{app:impl_details}.

\section{Theory}
\label{sec:theory}

This section formalizes what the CausalWrap wrapper can guarantee. All proofs are deferred to Appendix~\ref{app:proofs}.

\subsection{Penalty-Method Convergence}
Define the feasible set $\mathcal{F} := \{Q\in\mathcal{Q} : \Omega_{\Kcal}(Q)=0\}$ and consider the constrained population problem
\begin{equation}
\label{eq:constrained}
\min_{\phi}\; \mathcal{U}(P,Q_\phi) \quad \text{s.t.}\quad \Omega_{\Kcal}(Q_\phi)=0.
\end{equation}

\begin{assumption}[Basic regularity]
\label{assump:regularity}
(i)~The parameter space $\Phi$ is compact. (ii)~$\phi\mapsto \mathcal{U}(P,Q_\phi)$ and $\phi\mapsto \Omega_{\Kcal}(Q_\phi)$ are lower semicontinuous. (iii)~There exists at least one feasible parameter $\phi\in\Phi$ with $\Omega_{\Kcal}(Q_\phi)=0$. (iv)~$\inf_{\phi\in\Phi}\mathcal{U}(P,Q_\phi) > -\infty$.
\end{assumption}

\begin{theorem}[Penalty convergence]
\label{thm:penalty}
Under Assumption~\ref{assump:regularity}, let $\hat\phi_\lambda$ be any minimizer of the penalized population objective $\mathcal{U}(P,Q_\phi)+\lambda\Omega_{\Kcal}(Q_\phi)$. As $\lambda\to\infty$, every limit point $\phi^\star$ of $\{\hat\phi_\lambda\}$ is feasible and solves the constrained problem~\eqref{eq:constrained} (i.e., achieves minimal $\mathcal{U}$ among feasible points).
\end{theorem}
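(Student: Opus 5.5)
The argument is the classical exterior-penalty argument (as in \citet{bertsekas1999nonlinear}), adapted to lower semicontinuity. Write $u(\phi):=\mathcal{U}(P,Q_\phi)$ and $\omega(\phi):=\Omega_{\Kcal}(Q_\phi)\ge 0$. The first step is to check that minimizers $\hat\phi_\lambda$ exist. For $\lambda>0$, $u+\lambda\omega$ is lower semicontinuous on the compact set $\Phi$ by Assumption~\ref{assump:regularity}(i)--(ii), so it attains its minimum. To avoid $+\infty$ pathologies, I would also note that a feasible $\bar\phi$ from (iii) gives a finite value if $u(\bar\phi)<\infty$. I will take $u$ real-valued, so that $u^\star:=\inf\{u(\phi):\omega(\phi)=0\}$ is finite. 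This $u^\star$ is also attained: the feasible set $\{\omega=0\}=\{\omega\le 0\}$ is closed by lower semicontinuity, hence compact.

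The key estimate comes from comparing $\hat\phi_\lambda$ with any feasible $\bar\phi$:
\begin{equation*}
u(\hat\phi_\lambda)+\lambda\,\omega(\hat\phi_\lambda)\;\le\; u(\bar\phi)+\lambda\cdot 0 \;=\; u(\bar\phi).
\end{equation*}
Taking the infimum over feasible $\bar\phi$ gives two consequences. First, since $\omega\ge 0$, we get $u(\hat\phi_\lambda)\le u^\star$. Second, using (iv), with $\underline u:=\inf_\Phi u>-\infty$, we get
\begin{equation*}
0\le \omega(\hat\phi_\lambda)\le \frac{u^\star-\underline u}{\lambda}\;\longrightarrow\;0 .
\end{equation*}

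Next I pass to limit points. Take $\lambda_k\to\infty$ with $\hat\phi_{\lambda_k}\to\phi^\star$; such a point lies in $\Phi$ by compactness. Lower semicontinuity of $\omega$ gives $\omega(\phi^\star)\le\liminf_k\omega(\hat\phi_{\lambda_k})=0$, and since $\omega\ge 0$, $\phi^\star$ is feasible. Lower semicontinuity of $u$ gives $u(\phi^\star)\le\liminf_k u(\hat\phi_{\lambda_k})\le u^\star$. Feasibility gives $u(\phi^\star)\ge u^\star$, so $u(\phi^\star)=u^\star$, and $\phi^\star$ solves \eqref{eq:constrained}.

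There is no real obstacle here. The delicate point is the order of the two steps: the rate bound on $\omega(\hat\phi_\lambda)$ must be established first, because that is where (iv) is used. The other thing to handle is extended-valued objectives, which I would exclude by assuming $u$ is finite at some feasible point. Note also that lower semicontinuity alone suffices, since we only need $\liminf$ inequalities in the direction shown above; continuity is never required.
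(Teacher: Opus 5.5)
Your proposal is correct and follows essentially the same exterior-penalty argument as the paper. You compare $\hat\phi_\lambda$ against a feasible point and use (iv) to get $\Omega_{\Kcal}(Q_{\hat\phi_\lambda})\le B/\lambda$, then extract a convergent subsequence and apply lower semicontinuity twice, once for feasibility and once for optimality; your additional checks (existence of $\hat\phi_\lambda$, finiteness and attainment of $u^\star$) are harmless refinements that the paper leaves implicit.
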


\begin{remark}[ALM convergence rate]
Under standard constraint qualification (linear independence of active constraint gradients at a local minimizer), the augmented Lagrangian method converges at a linear rate in the outer loop, and does not require $\lambda\to\infty$---a bounded $\lambda$ suffices once the dual variables $\mu$ track the true Lagrange multipliers \citep{bertsekas1999nonlinear}. This avoids the ill-conditioning that pure penalty methods suffer from.
\end{remark}

\subsection{Approximate Factorization Implies Joint Closeness}
A key use of causal knowledge is to enforce approximate Markov factorization.

\begin{proposition}[Chain-rule total variation (TV) bound for approximate conditionals]
\label{prop:chain_tv}
Let $P$ and $Q$ admit factorizations with a common topological order $\pi$:
$P(x)=\prod_{j=1}^d P_j(x_{\pi(j)}\mid x_{\pi(<j)})$ and $Q(x)=\prod_{j=1}^d Q_j(x_{\pi(j)}\mid x_{\pi(<j)})$. If for all $j$,
\begin{equation}
\E_{X_{\pi(<j)}\sim P}\big[\TV(P_j(\cdot\mid X_{\pi(<j)}),Q_j(\cdot\mid X_{\pi(<j)}))\big] \le \varepsilon_j,
\end{equation}
then $\TV(P,Q) \le \sum_{j=1}^d \varepsilon_j$.
\end{proposition}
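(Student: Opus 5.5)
We prove the bound by a telescoping (hybrid) argument over the common topological order $\pi$, which reduces the joint TV distance to a sum of one-coordinate swaps. Write $y=x_{\pi(\cdot)}$ for the reordered vector. For $k=0,1,\dots,d$ define the hybrid distributions
\begin{equation*}
H_k(x) := \prod_{j=1}^{k} P_j(x_{\pi(j)}\mid x_{\pi(<j)})\prod_{j=k+1}^{d} Q_j(x_{\pi(j)}\mid x_{\pi(<j)}).
\end{equation*}
Then $H_d=P$ and $H_0=Q$, and the triangle inequality for TV gives
\begin{equation*}
\TV(P,Q)\le\sum_{k=1}^d \TV(H_k,H_{k-1}).
\end{equation*}
It therefore suffices to show $\TV(H_k,H_{k-1})\le\varepsilon_k$ for each $k$.

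Fix $k$. The distributions $H_k$ and $H_{k-1}$ share the prefix factor $\prod_{j<k}P_j$, which is exactly the $P$-marginal of $X_{\pi(<k)}$. They also share the suffix factors $\prod_{j>k}Q_j$. They differ only in the $k$-th conditional, which is $P_k$ in $H_k$ and $Q_k$ in $H_{k-1}$. Using $\TV=\tfrac12\|\cdot\|_1$ (with densities taken with respect to a common dominating product measure), we have
\begin{equation*}
2\,\TV(H_k,H_{k-1})=\int \prod_{j<k}P_j\,\bigl|P_k-Q_k\bigr|\prod_{j>k}Q_j\,dx.
\end{equation*}
The key step is to integrate out the suffix coordinates $x_{\pi(>k)}$ first, from $j=d$ down to $j=k+1$. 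Each $Q_j$ is a conditional density, so it integrates to one in $x_{\pi(j)}$ for every fixed value of the earlier coordinates, and the whole suffix integrates to $1$. Integrating next over $x_{\pi(k)}$ leaves
\begin{equation*}
2\,\TV\bigl(P_k(\cdot\mid x_{\pi(<k)}),\,Q_k(\cdot\mid x_{\pi(<k)})\bigr).
\end{equation*}
Finally, integrating against $\prod_{j<k}P_j$ is exactly an expectation over $X_{\pi(<k)}\sim P$. By hypothesis this expectation is at most $2\varepsilon_k$, so $\TV(H_k,H_{k-1})\le\varepsilon_k$. Summing over $k$ yields $\TV(P,Q)\le\sum_{k=1}^d\varepsilon_k$.

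The main obstacle is the bookkeeping in the hybrid construction, specifically the choice of which distribution supplies the prefix. The hypothesis averages the conditional TV over $X_{\pi(<j)}\sim P$, not over $Q$. The hybrids must therefore use $P$-conditionals before the swapped index and $Q$-conditionals after it, so that the suffix integrates out harmlessly and the prefix produces the $P$-marginal. With the opposite ordering we would get expectations under $Q$, which the assumptions do not control.

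A secondary technical point is justifying the manipulations for mixed continuous/binary coordinates. I will treat all quantities as densities with respect to a product of Lebesgue and counting measures, or more generally use regular conditional probabilities and Tonelli's theorem, so that the integration order can be exchanged.
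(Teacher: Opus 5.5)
Your proof is correct and follows the same argument as the paper. Your hybrids $H_k$ are exactly the paper's $R_{d-k}$, with $P$-conditionals on the prefix and $Q$-conditionals on the suffix, and both proofs telescope them with the triangle inequality. The only difference is that you bound each one-step term by a direct $L^1$ computation (integrating out the shared $Q$-suffix, then the swapped coordinate) where the paper uses a maximal-coupling argument; your version is more explicit and shows that $\TV(H_k,H_{k-1})$ actually equals the $P$-averaged conditional TV.
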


\begin{remark}[Design rationale and gap]
\label{rem:gap}
Theorem~\ref{thm:penalty} guarantees that the causal penalties vanish as $\lambda\to\infty$, and Proposition~\ref{prop:chain_tv} guarantees that if each conditional is $\varepsilon_j$-close then the joint TV is bounded. However, there is no end-to-end theorem establishing that vanishing HSIC-based penalties imply that $Q$'s conditionals match $P$'s---the residualized HSIC is a proxy for conditional independence, not an exact test (see the false-penalty caveat in Section~\ref{sec:setup}). CausalWrap's theory therefore provides a \emph{design rationale} rather than a closed-form guarantee: the penalties are motivated by the chain-rule bound, and empirical results validate that they improve causal fidelity in practice (see Figure~\ref{fig:theory_connection} in the Supplement).
\end{remark}

\section{Experiments}
\label{sec:experiments}

We evaluate CausalWrap across three tiers of increasing realism. Unless otherwise noted, experiments use 5 random seeds and we report seed-averaged means.

\subsection{Experimental Setup}

\paragraph{Base generators.} We wrap three base generators spanning GAN/diffusion/VAE families: CTGAN \citep{xu2019ctgan}, TabDDPM \citep{kotelnikov2023tabddpm}, and TVAE \citep{xu2019ctgan}. Each base generator is trained using its standard implementation, and CausalWrap is then fit as a post-hoc correction module on top of base samples.

\paragraph{Oracle comparator (Tier 1 only).} For simulated SCMs (Tier 1), we include an oracle comparator that directly samples from the true SCM via ancestral sampling, serving as a full-knowledge reference distribution. Since the average treatment effect (ATE) is estimated using a simple outcome-regression estimator, oracle sampling is not always the lowest ATE error on nonlinear SCMs due to estimator misspecification.

\paragraph{Other causal-aware methods.} STEAM \citep{amad2025steam} and C-DGM \citep{stoian2024constrained} address different problem settings (treatment-mechanism losses and domain-value constraints, respectively) and are not directly comparable; a systematic cross-regime comparison is left to future work.

\paragraph{Metrics.} We evaluate on three axes:
\begin{itemize}
    \item \textbf{Conventional:} MMD (RBF kernel), column-wise Jensen--Shannon divergence (JSD), and TSTR accuracy (train-on-synthetic, test-on-real).
    \item \textbf{Structural:} conditional independence pass rate (CI Pass): the fraction of specified ``forbidden'' pairs whose absolute residual correlation falls below $0.08$ (a correlation-threshold heuristic).
    \item \textbf{Causal:} When ground truth is available (Tiers~1--2), we report ATE absolute error ($|\hat\tau_{\mathrm{syn}} - \tau_{\mathrm{true}}|$) and conditional ATE (CATE) precision in estimation of heterogeneous effects (PEHE). When ground truth is unavailable (Tier~3), we report an ATE \emph{agreement} score comparing effect estimates on synthetic vs.\ real data across an estimator ensemble (OR, IPW, AIPW, TMLE).
\end{itemize}

\subsection{Tier 1: Simulated SCMs}
\label{sec:exp_scm}

\paragraph{Setup.} We generate data from three SCM families with known ground-truth interventional distributions:
\begin{itemize}
    \item \textbf{Linear-Gaussian (LG):} $d=10$ variables, random DAG with expected degree 2, linear mechanisms with additive Gaussian noise.
    \item \textbf{Nonlinear-Additive (NLA):} Same graph structure, nonlinear additive mechanisms using $\tanh(\cdot)$ and a small quadratic term, with additive Gaussian noise.
    \item \textbf{Mixed-Type (MT):} $d=10$ variables with mixed continuous and binary columns (5 continuous, 5 binary); continuous mechanisms are nonlinear additive and binary mechanisms use a logistic link with Bernoulli noise.
\end{itemize}
For each SCM we sample $n=5{,}000$ training records and generate $m=5{,}000$ synthetic records. We set the outcome to the sink node $Y=x_d$ and choose treatment $T$ as the earliest ancestor of $Y$ (ensuring a directed causal path). Ground-truth ATE is computed by Monte Carlo ancestral sampling under interventions $\doop(T{=}1)$ and $\doop(T{=}0)$ (100k samples each). Causal knowledge $\Kcal$ is created by revealing 50\% of the true edges in $E^+$, adding forbidden edges from $E^0$ (all non-edges involving root nodes), and including 2 monotonicity constraints from $\mathcal{M}$.

\paragraph{Results.} Table~\ref{tab:tier1_all} summarizes the results (full conventional metrics in Table~\ref{tab:lg_results}, Appendix~\ref{app:tier1_full}). CW consistently improves distributional metrics (MMD/JSD). Its impact on ATE error is modest and mixed: CW improves CTGAN/TVAE on average but is close to neutral for TabDDPM, with SCM-dependent regressions on nonlinear settings. This reflects a trade-off: enforcing causal constraints helps when the base generator violates them but can introduce estimator-facing shifts in harder regimes. On NLA, the linear outcome-regression estimator can be biased even under oracle sampling, muting CW gains. On MT, results are generator-dependent ($-4.8\%$ for TabDDPM but $+23.3\%$ for CTGAN), suggesting binary columns pose additional challenges. Tier~1 effects are modest; we next evaluate richer semi-synthetic benchmarks.

\begin{table}[t]
\centering
\caption{\textbf{Tier 1: CausalWrap improvement ($\Delta$\,\%) in ATE error} across 3 SCM types (5 seeds). \impup{Green} = CW reduces error; \impdn{red} = CW increases error. \emph{Gap closed} reports the fraction of the oracle gap closed by CW: $(e_{\mathrm{base}}-e_{\mathrm{CW}})/(e_{\mathrm{base}}-e_{\mathrm{oracle}})$, shown as ``--'' when $e_{\mathrm{oracle}}\ge e_{\mathrm{base}}$ or when the oracle headroom $(e_{\mathrm{base}}-e_{\mathrm{oracle}})/e_{\mathrm{base}}$ is $<5\%$ (unstable ratio). \textbf{Overall} is the unweighted average across the 3 SCM types. Best partial-knowledge result \underline{underlined}.}
\label{tab:tier1_all}
\footnotesize
\begin{tabular}{l cccc}
\toprule
Generator & LG & NLA & MT & Overall \\
\midrule
CTGAN       & 0.832 & 0.090 & 0.070 & 0.331 \\
$\Delta$ CW & \impup{$-$9.1\%} & \impdn{$+$11.6\%} & \impdn{$+$23.3\%} & \impup{$-$4.9\%} \\[3pt]
Gap closed  & \impup{$+$29.1\%} & -- & \impdn{$-$42.3\%} & \impup{$+$23.0\%} \\
\cmidrule(lr){1-5}
TabDDPM     & 0.644 & 0.180 & 0.042 & 0.289 \\
$\Delta$ CW & \impup{$-$1.7\%} & \impdn{$+$4.6\%} & \impup{$-$4.8\%} & \impup{$-$0.6\%} \\[3pt]
Gap closed  & \impup{$+$15.4\%} & -- & \impup{$+$19.0\%} & \impup{$+$5.6\%} \\
\cmidrule(lr){1-5}
TVAE        & 0.766 & 0.130 & 0.039 & 0.312 \\
$\Delta$ CW & \impup{$-$6.3\%} & \impdn{$+$0.1\%} & \impup{$-$2.6\%} & \impup{\underline{$-$5.3\%}} \\
Gap closed  & \impup{$+$25.0\%} & -- & \impup{$+$13.5\%} & \impup{$+$32.0\%} \\
\midrule
Oracle \scriptsize{(true SCM)} & \textbf{0.572} & \textbf{0.177} & \textbf{0.032} & \textbf{0.260} \\
\bottomrule
\end{tabular}
\end{table}

\subsection{Tier 2: Semi-Synthetic Causal Benchmarks}
\label{sec:exp_semisyn}

\paragraph{Setup.} We use two standard causal inference benchmarks:
\begin{itemize}
    \item \textbf{IHDP} (Infant Health and Development Program): 747 records, 25 covariates, binary treatment, continuous outcome. We use the widely used semi-synthetic IHDP replications \citep{hill2011ihdp} packaged with CEVAE \citep{louizos2017cevae} and evaluate 10 replications due to computational cost. Ground-truth potential outcome means $(m_0,m_1)$ are provided, so the individual treatment effect (ITE) is $m_1-m_0$ and ATE is its mean.
    \item \textbf{ACIC-style suite:} A lightweight semi-synthetic benchmark inspired by ACIC \citep{dorie2019acic} dimensions ($n=4{,}800$, $d=58$, binary treatment, continuous outcome). We evaluate 10 data-generating process (DGP) settings that vary baseline outcome complexity (linear vs.\ nonlinear) and treatment effect structure (homogeneous vs.\ heterogeneous), with known ground-truth ITE $\tau(X)$ by construction.
\end{itemize}

\paragraph{Causal knowledge.} For IHDP, we use a small set of illustrative constraints in the benchmark feature space: trusted edges into treatment/outcome ($x_1\!\to$ treatment, $x_2\!\to$ outcome), forbidden edges from treatment to selected covariates (treatment$\!\to x_1$, treatment$\!\to x_3$), and one monotonic constraint ($x_2$ positively affects outcome). For the ACIC-style suite, we mark a small subset of covariates as trusted parents of treatment (e.g., $x_1,\dots,x_5\to$ treatment) and include a small number of illustrative forbidden edges.

\paragraph{Protocol.} For each benchmark setting, we train each generator on the real training data, generate $n_{\mathrm{syn}} = n_{\mathrm{real}}$ synthetic records, then estimate ATE and CATE on the synthetic data using outcome regression. We compare estimated effects to ground-truth. Unless otherwise noted we use 5 random seeds; due to computational cost, ACIC results use 2 seeds and a smaller training budget (50 iterations) for the base generators.

\paragraph{Results.} CausalWrap provides its strongest improvements on semi-synthetic benchmarks (Figures~\ref{fig:ihdp_results} and~\ref{fig:acic_results}). Across IHDP and ACIC settings, the wrapper often reduces ATE error relative to the base generator while largely preserving conventional utility, though gains are not uniform across all bases and settings. For example, on IHDP CW improves CTGAN and TabDDPM mean ATE error (about $-7\%$ and $-57\%$, respectively) while increasing TVAE error; on ACIC it substantially improves TVAE (about $-63\%$) and also improves CTGAN and TabDDPM (about $-19\%$ and $-12\%$). These patterns are consistent with the intuition behind Proposition~\ref{prop:chain_tv}: when confounding structure is richer, enforcing causal constraints can provide more leverage than on simple simulated SCMs.

\begin{figure*}[t]
    \centering
    \begin{subfigure}[b]{0.48\linewidth}
        \includegraphics[width=\linewidth]{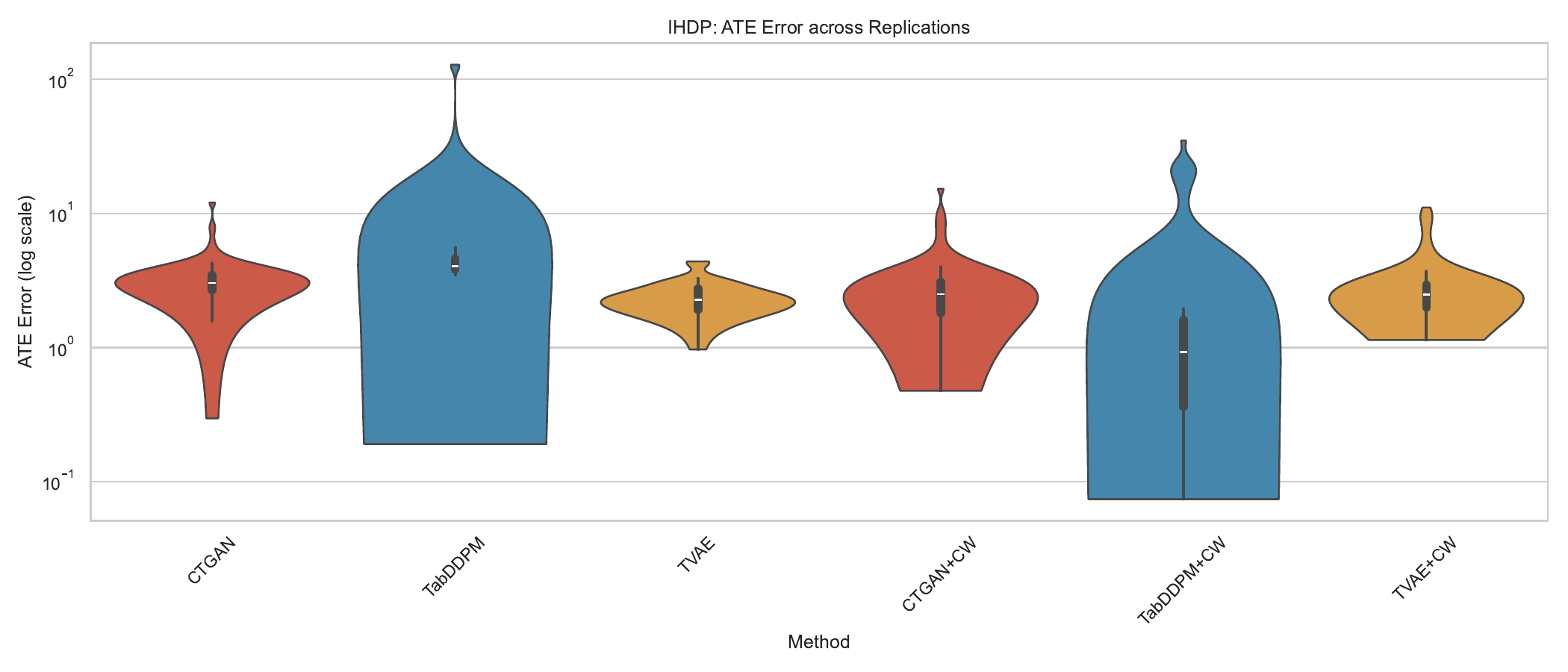}
        \caption{IHDP: ATE absolute error}
    \end{subfigure}
    \hfill
    \begin{subfigure}[b]{0.48\linewidth}
        \includegraphics[width=\linewidth]{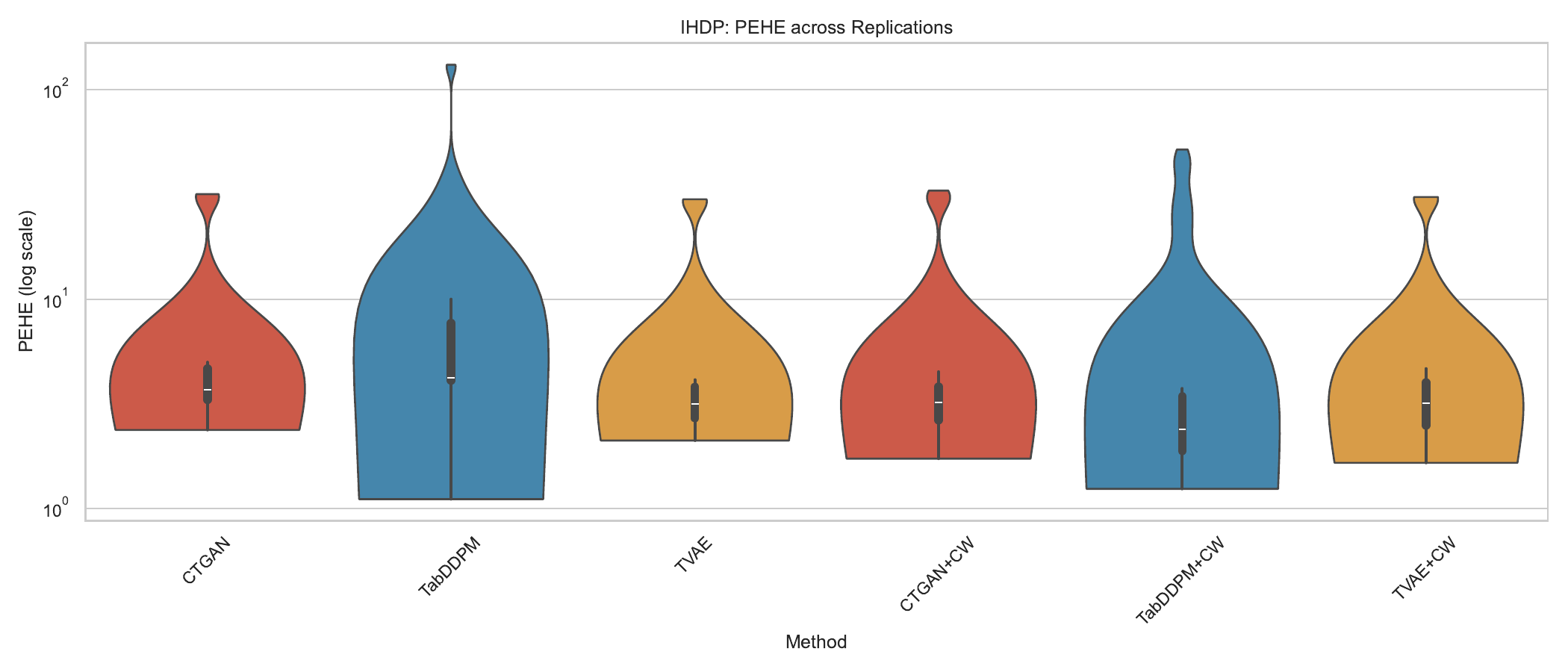}
        \caption{IHDP: CATE PEHE}
    \end{subfigure}
    \caption{\textbf{Tier 2: IHDP benchmark} (10 replications, 5 seeds; log-scale violin plots). CW reduces mean ATE error for CTGAN and TabDDPM but can increase error for TVAE, and may increase tail variance on some replications.}
    \label{fig:ihdp_results}
\end{figure*}

\begin{figure*}[t]
    \centering
    \includegraphics[width=0.9\linewidth]{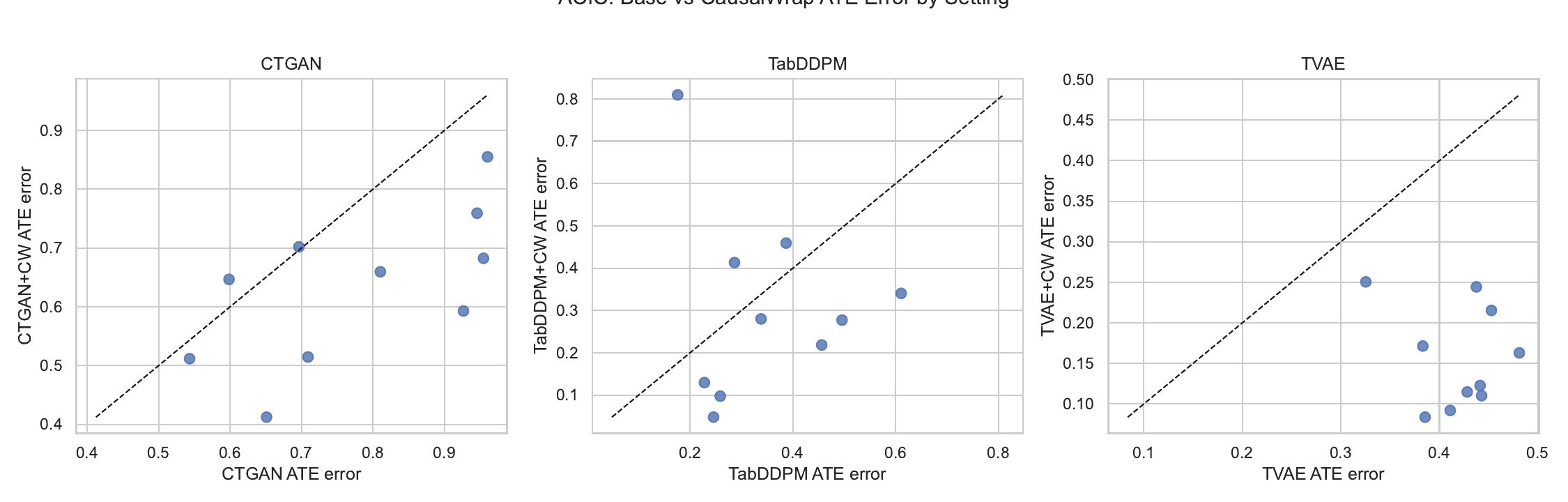}
        \caption{\textbf{Tier 2: ACIC-style ATE error across 10 DGP settings} (2 seeds). CW improves all three bases on most settings (largest gains for TVAE), illustrating base-model sensitivity of post-hoc correction.}
    \label{fig:acic_results}
\end{figure*}

\subsection{Tier 3: Real-World ICU Cohort}
\label{sec:exp_icu}

\paragraph{Setup.} We extract an adult ICU cohort from MIMIC-IV \citep{johnson2023mimiciv}, retaining the first ICU stay per subject and restricting to adults (age$\ge18$). We define a 24-hour baseline window from ICU admission. The resulting cohort ($n=2{,}000$) contains demographics (age, sex, race/ethnicity), two binary treatment indicators (any vasopressor administration and mechanical ventilation within the first 24 hours), three baseline laboratory values (lactate, creatinine, white blood cell count (WBC)---first measurement in the window, median-imputed when missing), and outcome (28-day mortality). The cohort has 29\% vasopressor-treated patients and 13\% mortality.

\paragraph{Causal knowledge.} We encode a conservative, clinician-plausible subset of structural knowledge for the 24-hour ICU baseline snapshot (Figure~\ref{fig:icu_knowledge}). We treat early lactate and creatinine as severity proxies that influence both vasopressor initiation and subsequent mortality risk, and include demographics (age, sex) as stable drivers of treatment propensity. Concretely, $E^+$ contains trusted edges from baseline covariates to vasopressor treatment and from $\{\text{age}, \text{sex}, \text{ventilation}, \text{vasopressors}, \text{lactate}, \text{creatinine}\}$ to 28-day mortality, and $\mathcal{M}$ imposes monotonic constraints that increasing lactate and creatinine should not decrease mortality risk. We additionally include temporal forbidden edges $E^0$ encoding biological and temporal impossibilities: treatment cannot cause pre-treatment demographics, and the outcome cannot retroactively cause baseline-window measurements.

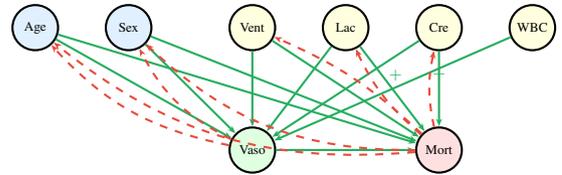
\begin{figure}[t]
\centering
\begin{tikzpicture}[
  node distance=0.7cm and 0.6cm,
  rv/.style={circle, draw, thick, minimum size=6mm, font=\tiny, inner sep=1pt},
  demo/.style={rv, fill=lightblue},
  lab/.style={rv, fill=lightyellow},
  trt/.style={rv, fill=lightgreen},
  outc/.style={rv, fill=lightred},
  ep/.style={-{Stealth[length=3pt]}, thick, protectgreen},
  forb/.style={-{Stealth[length=3pt]}, thick, riskred, dashed},
  mono/.style={-{Stealth[length=3pt]}, thick, protectgreen,
               postaction={decorate, decoration={markings,
               mark=at position 0.55 with {\node[font=\tiny, above=1pt]{$+$};}}}},
]
  \node[demo] (age) {Age};
  \node[demo, right=of age] (sex) {Sex};
  \node[lab, right=1.0cm of sex] (vent) {Vent};
  \node[lab, right=of vent] (lac) {Lac};
  \node[lab, right=of lac] (cre) {Cre};
  \node[lab, right=of cre] (wbc) {WBC};

  \node[trt, below=1.0cm of vent] (vaso) {Vaso};
  \node[outc, below=1.0cm of cre] (mort) {Mort};

  \draw[ep] (age) -- (vaso);
  \draw[ep] (sex) -- (vaso);
  \draw[ep] (vent) -- (vaso);
  \draw[ep] (lac) -- (vaso);
  \draw[ep] (cre) -- (vaso);
  \draw[ep] (wbc) -- (vaso);

  \draw[ep] (vaso) -- (mort);
  \draw[ep] (age) -- (mort);
  \draw[ep] (sex) -- (mort);
  \draw[ep] (vent) -- (mort);

  \draw[mono] (lac) -- (mort);
  \draw[mono] (cre) -- (mort);

  \draw[forb, bend left=18] (vaso) to (age);
  \draw[forb, bend left=18] (vaso) to (sex);
  \draw[forb, bend left=12] (mort) to (lac);
  \draw[forb, bend left=12] (mort) to (cre);
  \draw[forb, bend right=10] (mort) to (vent);
  \draw[forb, bend left=22] (mort) to (age);
  \draw[forb, bend left=22] (mort) to (sex);
\end{tikzpicture}
\caption{Tier~3 ICU causal knowledge $\Kcal$.
{\color{protectgreen}Green solid}: trusted edges $E^+$ (demographics, labs, and ventilation cause vasopressor use; treatment and baseline variables cause mortality).
{\color{protectgreen}$+$} on an arrow: monotonic constraint in $\mathcal{M}$ (lactate$\uparrow$, creatinine$\uparrow$ $\Rightarrow$ mortality risk$\uparrow$).
{\color{riskred}Red dashed}: forbidden directions $E^0$ (temporal/biological impossibilities).}
\label{fig:icu_knowledge}
\end{figure}

\paragraph{Evaluation.} Since ground-truth treatment effects are unknown, we evaluate \emph{agreement} of causal conclusions between real and synthetic data. We compute a reference ATE vector $\{\hat\tau_l^{(\mathrm{real})}\}_{l=1}^L$ on the real cohort using an estimator ensemble of $L=4$ estimators (outcome regression, IPW, AIPW, TMLE) for the vasopressor treatment indicator, compute the analogous vector on synthetic data, and report
$1 - \tfrac{1}{L}\sum_{l=1}^L |\hat\tau_l^{(\mathrm{syn})}-\hat\tau_l^{(\mathrm{real})}|\big/\bigl(\tfrac{1}{L}\sum_{l} |\hat\tau_l^{(\mathrm{real})}|\bigr)$ (clipped to $[0,1]$). The estimator ensemble reduces sensitivity to any single misspecified estimator. Reproducing Tier~3 requires credentialed access to MIMIC-IV via PhysioNet \citep{goldberger2000physionet}; the cohort extraction and evaluation pipeline will be released with our code upon acceptance.

\paragraph{Results.} Figure~\ref{fig:icu_radar} reports Tier~3 results. CausalWrap lifts ATE agreement for the two generators whose baselines produce zero or near-zero agreement: TabDDPM ($0.00\to0.38$, the largest absolute gain) and TVAE ($0.15\to0.28$). For CTGAN, whose uncorrected samples already achieve moderate agreement ($0.56$), the wrapper's additional constraints do not improve---and slightly reduce---causal fidelity. On distributional metrics, CW substantially tightens CTGAN's MMD ($-64\%$) while leaving TSTR essentially unchanged, but for TabDDPM the correction widens the distributional gap (MMD $+61\%$, TSTR $-28\%$), consistent with the Tier~1--2 finding that post-hoc correction faces a steeper fidelity--constraint trade-off when the base generator poorly matches the joint distribution. TVAE shows a non-trivial MMD increase ($+40\%$) alongside its ATE gain. Overall, CW's benefit is strongest when the base generator clearly violates the causal structure in $\Kcal$; when the base already approximately respects the encoded constraints, the optimization budget spent on $E^0$ and $\mathcal{M}$ penalties yields diminishing returns.

\begin{figure}[t]
    \centering
    \includegraphics[width=0.55\linewidth]{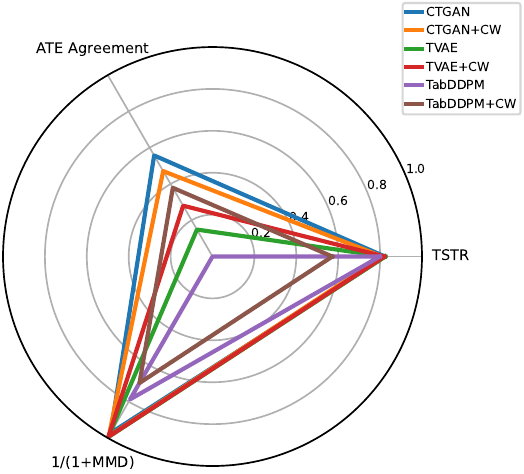}
    \caption{\textbf{Tier 3: ICU cohort (MIMIC-IV).} Radar plot of seed-averaged metrics (TSTR, ATE agreement, and $1/(1+\mathrm{MMD})$) for each base generator and its CausalWrap variant. All axes are scaled so larger is better.}
    \label{fig:icu_radar} 
\end{figure}

\vspace{-1em}
\subsection{Ablation Studies}
\label{sec:ablations}

We ablate key design choices using the Linear-Gaussian SCM with TabDDPM as the base generator.

\paragraph{Constraint type ablation.} We compare using only CI constraints ($\alpha>0, \beta=0$), only monotonicity ($\beta>0, \alpha=0$), and both combined (Figure~\ref{fig:ablation_constraints}, Supplement). The constraint types contribute differently: monotonicity constraints provide the strongest signal in this setting, while CI constraints can be weaker and may introduce trade-offs when combined.

\paragraph{Knowledge fraction.} We vary the fraction of correct edges revealed in $E^+$ from 0\% to 100\% (Figure~\ref{fig:ablation_knowledge}, Supplement). ATE error exhibits a non-monotonic pattern, suggesting that over-constraining the generator can be counterproductive when constraints conflict or over-regularize.

\paragraph{Robustness to wrong edges.} We corrupt $E^+$ by replacing a fraction of edges with incorrect ones (Figure~\ref{fig:ablation_wrong}, Appendix~\ref{app:ablations}). ATE error remains stable under moderate corruption (${\le}30\%$ wrong edges) without catastrophic degradation.

\paragraph{ALM vs.\ fixed penalty.} The ALM schedule outperforms fixed-$\lambda$ variants (Figure~\ref{fig:ablation_alm}, Appendix~\ref{app:ablations}), suggesting that adaptive dual updates reduce sensitivity to penalty tuning.

\paragraph{Forbidden-edge ablation ($E^0$).} Adding temporal forbidden edges to $\Kcal$ improves ATE agreement for TabDDPM but weakens gains for CTGAN and TVAE, likely due to \emph{constraint budget dilution}: HSIC penalties for edges the base generator already approximately satisfies consume optimization capacity without useful gradient signal. Full results are in Table~\ref{tab:e0_ablation} (Supplement).


\section{Discussion and Limitations}
\label{sec:discussion}

We summarize the main takeaways and acknowledge current limitations.

\paragraph{Theoretical contributions and limitations thereof.} CausalWrap provides a principled framework for injecting partial causal knowledge into black-box generators. Theorem~\ref{thm:penalty} guarantees penalty convergence and Proposition~\ref{prop:chain_tv} relates approximate conditional matching to joint TV control. As discussed in Remark~\ref{rem:gap}, these two results do not compose into a single end-to-end guarantee because residualized HSIC is a proxy for conditional independence, not an exact test. The theory therefore serves as a \emph{design rationale}---motivating the penalty structure---rather than a closed-form bound on $\TV(P,Q)$.

\paragraph{Key empirical findings.} Three patterns emerge. \emph{(1)~CW's benefit scales with confounding complexity}: gains are modest on simple simulated SCMs but larger on semi-synthetic benchmarks with richer confounding, consistent with Proposition~\ref{prop:chain_tv}. \emph{(2)~Constraint types contribute distinctly}: monotonicity constraints are often the most effective single type; CI constraints can be weaker and may introduce trade-offs when combined, suggesting practitioners should prioritize mechanism-level constraints when available. \emph{(3)~Knowledge quality matters non-monotonically}: over-constraining can hurt, wrong-edge corruption shows moderate robustness, and the forbidden-edge ablation (Table~\ref{tab:e0_ablation}, Supplement) confirms that redundant $E^0$ constraints can dilute the optimization budget. CW is best deployed with a conservative subset of high-confidence, \emph{actively violated} constraints.

\paragraph{Limitations.}
CW can only enforce differentiable penalty functionals; hard combinatorial constraints are not supported. Over-specification can dilute the optimization budget (as shown by the knowledge-fraction and $E^0$ ablations). The residualized HSIC penalties are proxies for CI with limited statistical power under edge-model misspecification---a likely contributor to weaker CI-only results. As a post-hoc surrogate-utility correction, CW may not preserve higher-order interactions beyond the explicitly enforced constraints. Finally, Tier~3 relies on restricted-access MIMIC-IV data, limiting reproducibility.

\paragraph{Future directions.}
Natural extensions include longitudinal data (temporal partial orders as forbidden edges), Bayesian graph posterior sampling to weight constraints by posterior probability, and adaptive constraint selection to identify which penalties provide the most gradient signal for a given base generator.

\vspace{-1em}
\section{Conclusion}
\label{sec:conclusion}

CausalWrap provides a model-agnostic framework for injecting partial causal knowledge into black-box tabular generators via a lightweight post-hoc correction map. Across simulated SCMs, semi-synthetic benchmarks, and a real-world ICU cohort, CW improves causal fidelity---most strongly when the base generator clearly violates the encoded structure---while largely preserving conventional utility, with the strongest gains when constraints target actively violated causal relationships.

\bibliography{paper1}

\newpage
\onecolumn
\title{CausalWrap: Model-Agnostic Causal Constraint Wrappers\\for Tabular Synthetic Data\\(Supplementary Material)}
\maketitle

\appendix

\section{Proofs}
\label{app:proofs}

\begin{proof}[Proof of Theorem~\ref{thm:penalty}]
Let $\phi_f$ be a feasible parameter. For any $\lambda>0$,
\begin{multline*}
\mathcal{U}(P,Q_{\hat\phi_\lambda})+\lambda\Omega_{\Kcal}(Q_{\hat\phi_\lambda}) 
  \le \mathcal{U}(P,Q_{\phi_f})+\lambda\Omega_{\Kcal}(Q_{\phi_f}) = \mathcal{U}(P,Q_{\phi_f}).
\end{multline*}
Since $\mathcal{U}(P,Q_{\hat\phi_\lambda})\ge \inf_{\phi}\mathcal{U}(P,Q_\phi) > -\infty$, we obtain
$\lambda\Omega_{\Kcal}(Q_{\hat\phi_\lambda})\le \mathcal{U}(P,Q_{\phi_f})-\inf_{\phi\in\Phi}\mathcal{U}(P,Q_\phi) =: B < \infty$,
so $\Omega_{\Kcal}(Q_{\hat\phi_\lambda})\le B/\lambda \to 0$ as $\lambda\to\infty$.

By compactness of $\Phi$, take a convergent subsequence $\hat\phi_{\lambda_k}\to\phi^\star$. Lower semicontinuity gives
$\Omega_{\Kcal}(Q_{\phi^\star})\le \liminf_k \Omega_{\Kcal}(Q_{\hat\phi_{\lambda_k}})=0$, so $\phi^\star$ is feasible.

To show optimality among feasible points, fix any feasible $\phi$. From optimality of $\hat\phi_{\lambda}$,
$\mathcal{U}(P,Q_{\hat\phi_{\lambda}}) \le \mathcal{U}(P,Q_{\hat\phi_{\lambda}})+\lambda\Omega_{\Kcal}(Q_{\hat\phi_{\lambda}}) \le \mathcal{U}(P,Q_{\phi})$.
Taking $\liminf$ along the subsequence and using lower semicontinuity yields
$\mathcal{U}(P,Q_{\phi^\star})\le \mathcal{U}(P,Q_{\phi})$ for all feasible $\phi$, proving $\phi^\star$ solves~\eqref{eq:constrained}.
\end{proof}

\begin{proof}[Proof of Proposition~\ref{prop:chain_tv}]
We construct a coupling between $P$ and $Q$ via sequential sampling. Define intermediate distributions $R_k$ for $k=0,\dots,d$ where $R_0 = P$, $R_d = Q$, and $R_k$ uses $P$'s conditionals for variables $\pi(1),\dots,\pi(d-k)$ and $Q$'s conditionals for $\pi(d-k+1),\dots,\pi(d)$.

For each consecutive pair $(R_{k-1}, R_k)$, these differ only in the conditional for variable $\pi(d-k+1)$. Construct a maximal coupling at step $\pi(d-k+1)$: conditional on the history $X_{\pi(<d-k+1)}$ (which is identically distributed under both $R_{k-1}$ and $R_k$ when conditioned on identical past draws), the probability of a mismatch at this step equals $\TV(P_{\pi(d-k+1)}(\cdot\mid X_{\pi(<d-k+1)}), Q_{\pi(d-k+1)}(\cdot\mid X_{\pi(<d-k+1)}))$.

Taking expectations over the history and summing over all $d$ steps:
$\TV(P,Q) \le \TV(R_0, R_1) + \cdots + \TV(R_{d-1}, R_d) \le \sum_{j=1}^d \varepsilon_j$.
This uses the triangle inequality for TV and the fact that each intermediate TV is bounded by $\varepsilon_j$ via the conditional coupling.
\end{proof}

\section{Implementation Details}
\label{app:impl_details}

\paragraph{Correction map and utility surrogate.} We implement $f_\phi$ as a small residual MLP initialized near the identity map. We use a lightweight surrogate utility consisting of moment matching (means and covariances between corrected synthetic and real minibatches) plus an identity regularizer $\|f_\phi(\tilde x)-\tilde x\|_2^2$ to prevent over-correction; this regularizer is set stronger for TabDDPM than for CTGAN/TVAE in our experiments.

\paragraph{Dependence estimation.} We use the biased HSIC estimator with a Gaussian RBF kernel whose bandwidth is set by the median heuristic \citep{gretton2005hsic}. In our implementation, we (i)~fit edge models $\hat m_j$ on real data using trusted parents $E^+$, (ii)~compute residuals $\tilde X_j$ on corrected samples, and (iii)~apply HSIC to residual pairs specified by $E^0$ (subsampling pairs for efficiency).

\paragraph{Monotonicity estimation.} We estimate monotonicity using paired synthetic samples: we perturb a specified feature by a small $\delta$ and apply a hinge penalty to sign violations in the corresponding output change.

\paragraph{Binary columns.} For columns detected as binary $\{0,1\}$, we clamp values to $[0,1]$ when computing constraint penalties during training. At sampling time, we interpret corrected values as Bernoulli probabilities (with a simple marginal mean-shift calibration) and sample binary values; if a column is already binary and close to the target marginal, we leave it unchanged to avoid injecting additional Monte Carlo noise.

\paragraph{Computational overhead.} The constraint penalties add $O(m\cdot|\mathcal{C}|\cdot d_S)$ computation per minibatch for CI constraints (where $d_S$ is the maximum conditioning set size) and $O(m\cdot|\mathcal{M}|)$ for monotonicity. In our settings, the additional overhead is typically modest relative to base training.

\paragraph{Hyperparameters.} Unless otherwise noted, we use the following ALM schedule across all experiments: initial penalty $\lambda_0=1.0$, growth rate $\rho=1.5$, Adam optimizer with learning rate $5\times 10^{-2}$. The number of outer iterations is $K_{\mathrm{outer}}=20$ for Tier~1 and ablations, $K_{\mathrm{outer}}=10$ for Tiers~2--3. Inner steps per outer iteration match the base generator's training budget (typically 100--500 steps depending on the generator and dataset size). Monotonicity perturbation magnitude is $\delta=0.5$ throughout.

\section{Theory Connection Diagram}
\label{app:theory_connection}

\begin{figure}[ht]
\centering
\begin{tikzpicture}[
  box/.style={draw, rounded corners, minimum height=0.8cm,
              minimum width=2.0cm, align=center, font=\scriptsize, thick},
  arr/.style={-Stealth, very thick, accentblue},
  gaparr/.style={-Stealth, very thick, riskred, dashed}
]
  \node[box, fill=lightred] (ci)
    {$\Omega_{\Kcal}(Q_\phi) \to 0$\\[1pt]
     $\mathcal{U}$ minimized\\[-1pt]
     {\tiny (Thm~\ref{thm:penalty})}};
  \node[box, fill=lightyellow, right=0.9cm of ci] (cond)
    {$P_j(\cdot\mid\mathrm{Pa}_j)$\\[1pt]
     $\approx Q_j(\cdot\mid\mathrm{Pa}_j)$};
  \node[box, fill=lightgreen, right=0.9cm of cond] (joint)
    {$\TV(P,Q)$\\[1pt]
     $\le \sum_j \varepsilon_j$\\[-1pt]
     {\tiny (Prop~\ref{prop:chain_tv})}};

  \draw[gaparr] (ci) --
    node[above, font=\tiny, riskred]{heuristic} (cond);
  \draw[arr] (cond) --
    node[above, font=\tiny]{formal} (joint);
\end{tikzpicture}
\caption{Connecting the theoretical guarantees.
Theorem~\ref{thm:penalty} ensures penalties vanish;
Proposition~\ref{prop:chain_tv} converts conditional closeness to joint TV control.
The intermediate step (dashed red) is a design motivation: residualized HSIC penalties are a proxy for conditional matching, not a formal equivalence.}
\label{fig:theory_connection}
\end{figure}
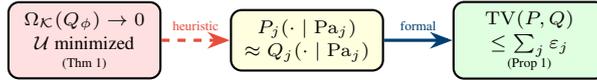

\section{Additional Experimental Results}
\label{app:experiments}

\subsection{Tier~1: Full LG Conventional Metrics}
\label{app:tier1_full}

\begin{table*}[ht]
\centering
\caption{\textbf{Tier 1: CausalWrap on the LG SCM} ($d=10$, $n=5{,}000$, 5 seeds). Each pair shows base $\to$ +CW. $\Delta$\,\% row: \impup{green} = improvement, \impdn{red} = degradation. Underline marks the largest ATE error reduction among wrapped bases.}
\label{tab:lg_results}
\footnotesize
\begin{tabular}{l ccc cc}
\toprule
& \multicolumn{3}{c}{Conventional} & \multicolumn{2}{c}{Causal} \\
\cmidrule(lr){2-4} \cmidrule(lr){5-6}
Generator & MMD$\downarrow$ & JSD$\downarrow$ & TSTR$\uparrow$ & ATE err$\downarrow$ & CI Pass$\uparrow$ \\
\midrule
CTGAN         & 0.023 & 0.052 & 0.744 & 0.832 & 0.669 \\
$\Delta$ CW   & \impup{$-$72.7\%} & \impup{$-$31.7\%} & \impup{$+$1.0\%} & \impup{\underline{$-9.1\%$}} & \impup{$+$4.7\%} \\[3pt]
TabDDPM       & 0.011 & 0.020 & 0.763 & 0.644 & 0.486 \\
$\Delta$ CW   & \impup{$-$39.2\%} & \impup{$-$7.6\%} & \impup{$+$0.3\%} & \impup{$-$1.7\%} & \impup{$+$34.1\%} \\[3pt]
TVAE          & 0.011 & 0.056 & 0.761 & 0.766 & 0.916 \\
$\Delta$ CW   & \impup{$-$24.5\%} & \impup{$-$36.4\%} & \impdn{$-$0.8\%} & \impup{$-$6.3\%} & \impdn{$-$13.3\%} \\
\midrule
Oracle \scriptsize{(true SCM)} & \textbf{0.003} & \textbf{0.007} & \textbf{0.769} & \textbf{0.572} & \textbf{0.947} \\
\bottomrule
\end{tabular}
\end{table*}

\subsection{Forbidden-Edge Ablation}
\label{app:e0_ablation}

Table~\ref{tab:e0_ablation} compares CausalWrap on the ICU cohort under two knowledge configurations: $E^0=\emptyset$ (only trusted edges and monotonicity) versus the full $\Kcal$ with temporal forbidden edges. Adding $E^0$ improves TabDDPM's ATE agreement gain ($+0.38$ vs.\ $+0.24$) but weakens the gains for CTGAN (which flips from $+0.19$ to $-0.09$) and TVAE ($+0.31\to+0.13$). The likely mechanism is \emph{constraint budget dilution}: enforcing edge removal on a frozen model requires more correction to the base residual than correcting ATE bias, and HSIC penalties for forbidden edges that the base generator already approximately satisfies consume optimization capacity without providing useful gradient signal, leaving less room for the beneficial monotonicity and moment-matching terms.

\begin{table}[ht]
\centering
\caption{\textbf{Forbidden-edge ablation} on the ICU cohort (5 seeds). $\Delta$CW reports the change from base to +CW within each knowledge configuration. $E^0{=}\emptyset$: trusted edges and monotonicity only. Full $\Kcal$: adds 7 temporal forbidden edges.}
\label{tab:e0_ablation}
\footnotesize
\setlength{\tabcolsep}{2.5pt}
\begin{tabular}{@{}l ccc ccc@{}}
\toprule
& \multicolumn{3}{c}{$E^0 = \emptyset$} & \multicolumn{3}{c}{Full $\Kcal$ (with $E^0$)} \\
\cmidrule(lr){2-4} \cmidrule(lr){5-7}
Base & $\Delta$MMD & $\Delta$TSTR & $\Delta$ATE & $\Delta$MMD & $\Delta$TSTR & $\Delta$ATE \\
\midrule
CTGAN  & \impup{$-$69\%} & \impup{$+$0.1\%} & \impup{$+$.19} & \impup{$-$64\%} & \impdn{$-$1.5\%} & \impdn{$-$.09} \\
TabDDPM & \impdn{$+$72\%} & \impdn{$-$17\%}  & \impup{$+$.24} & \impdn{$+$61\%} & \impdn{$-$28\%}  & \impup{$+$.38} \\
TVAE   & \impup{$-$6\%}  & \impdn{$-$0.4\%} & \impup{$+$.31} & \impdn{$+$40\%} & \impdn{$-$0.6\%} & \impup{$+$.13} \\
\bottomrule
\end{tabular}
\end{table}

\subsection{Constraint-Type and Knowledge-Fraction Ablations}
\label{app:constraint_knowledge_ablations}

\begin{figure*}[ht]
    \centering
    \begin{subfigure}[b]{0.44\linewidth}
        \includegraphics[width=\linewidth]{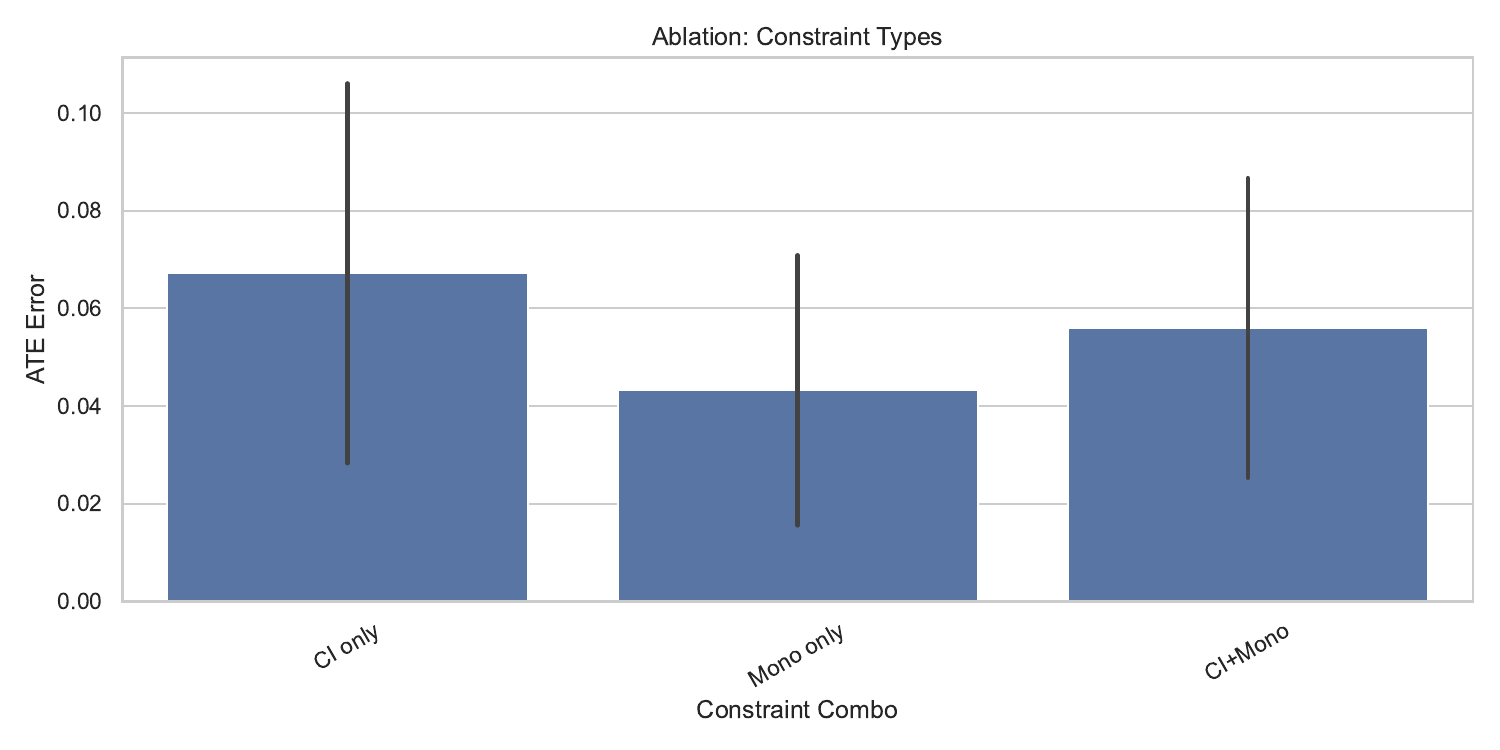}
        \caption{Constraint type ablation}
        \label{fig:ablation_constraints}
    \end{subfigure}
    \hfill
    \begin{subfigure}[b]{0.44\linewidth}
        \includegraphics[width=\linewidth]{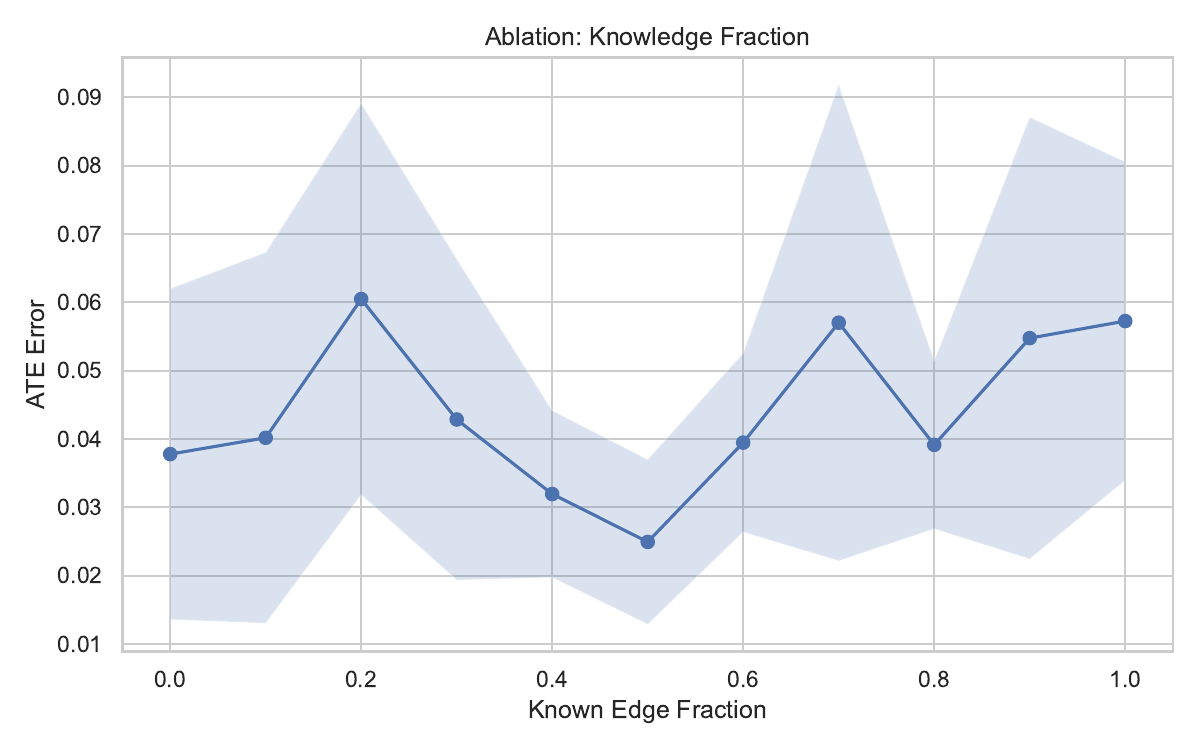}
        \caption{Knowledge fraction}
        \label{fig:ablation_knowledge}
    \end{subfigure}
    \caption{\textbf{Ablation studies} on the Linear-Gaussian SCM with TabDDPM base.}
\end{figure*}

\subsection{Additional Ablation Studies}
\label{app:ablations}

\begin{figure*}[ht]
    \centering
    \begin{subfigure}[b]{0.48\linewidth}
        \includegraphics[width=\linewidth]{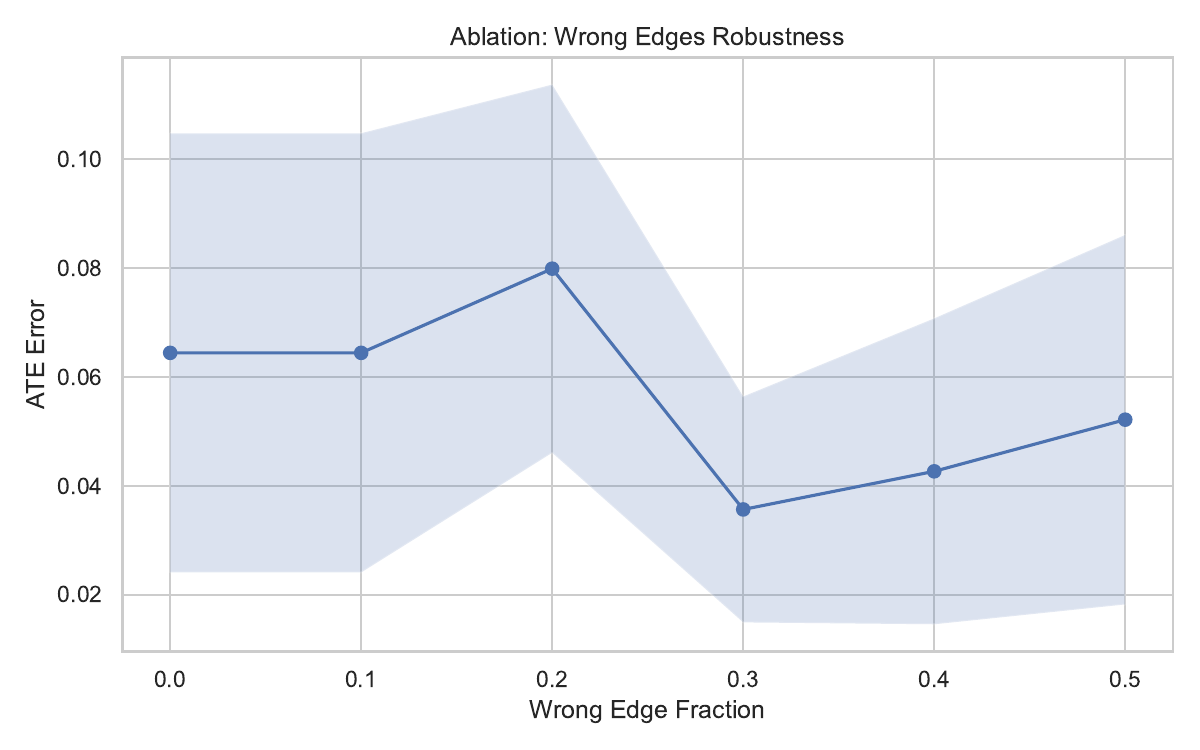}
        \caption{Robustness to wrong edges}
        \label{fig:ablation_wrong}
    \end{subfigure}
    \hfill
    \begin{subfigure}[b]{0.48\linewidth}
        \includegraphics[width=\linewidth]{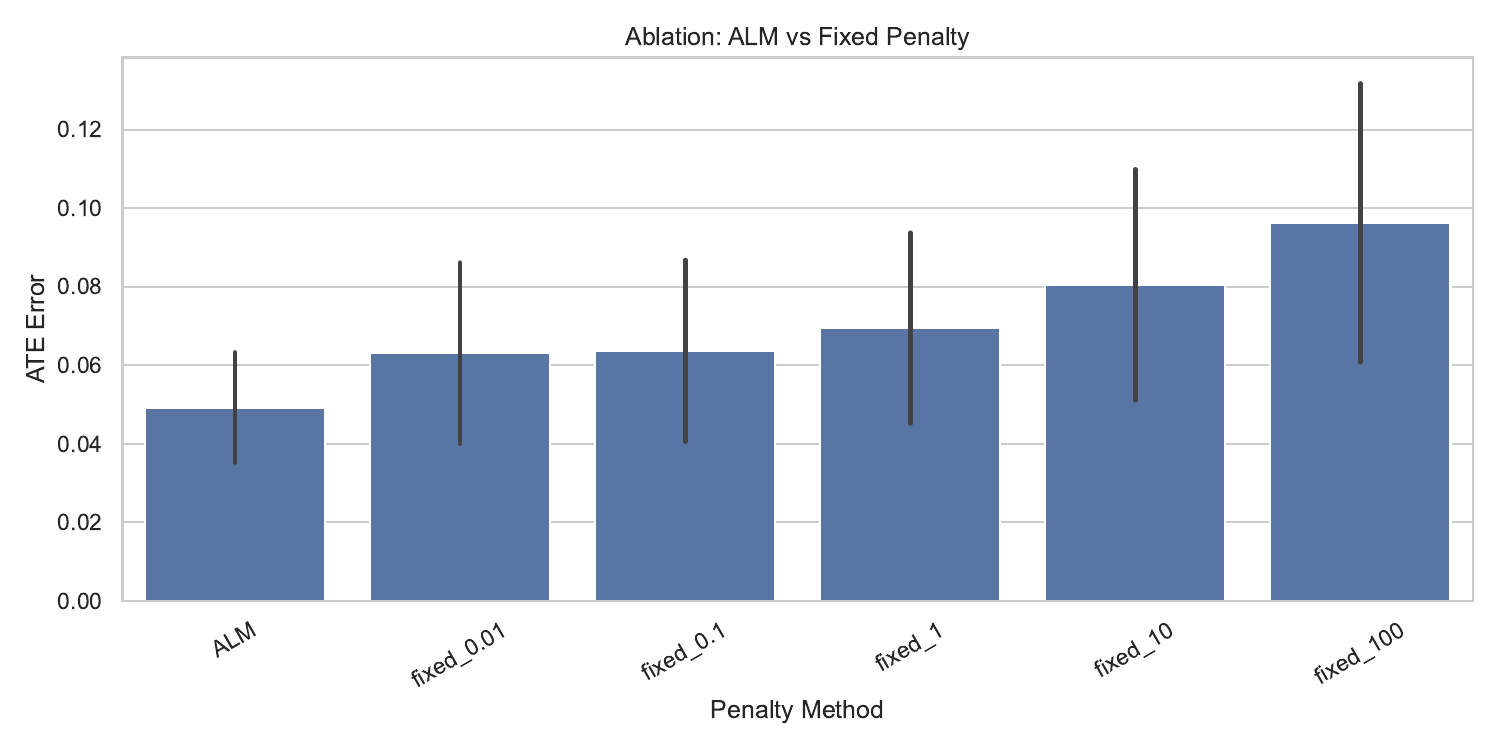}
        \caption{ALM vs.\ fixed $\lambda$}
        \label{fig:ablation_alm}
    \end{subfigure}
    \caption{\textbf{Additional ablation studies} on the Linear-Gaussian SCM with TabDDPM base.}
\end{figure*}

\end{document}